\documentclass[sigconf,nonacm]{acmart}

\microtypesetup{expansion=false,protrusion=false}

\usepackage{amsmath}

\usepackage{amssymb}
\usepackage{bm}
\usepackage{booktabs}
\usepackage{algorithm}
\usepackage{algpseudocode}
\usepackage{enumitem}
\usepackage{tikz}
\usepackage{pgfplots}
\pgfplotsset{compat=1.18}
\usetikzlibrary{positioning, arrows.meta, calc, fit, backgrounds, shapes.geometric}

\newcommand{\siggate}{\textsc{SigGate-GT}}
\newcommand{\RR}{\mathbb{R}}
\newcommand{\pval}[1]{\textsuperscript{#1}}

\AtBeginDocument{%
  }
\copyrightyear{2026}
\acmYear{2026}

\begin{document}

\title{Capacity-Controlled Global Attention for Graph Transformers}

\author{Yang Liu}
\affiliation{%
	\institution{Brain Investing Limited}
	\city{Hong Kong}
	\country{China}}

\author{Dongxin Guo}
\affiliation{%
	\institution{The University of Hong Kong}
	\city{Hong Kong}
	\country{China}}

\author{Tom Zheng}
\affiliation{%
	\institution{Stellaris AI Limited}
	\city{Hong Kong}
	\country{China}}

\author{Siu Ming Yiu}
\affiliation{%
	\institution{The University of Hong Kong}
	\city{Hong Kong}
	\country{China}}

\author{Liam Ning}
\affiliation{%
	\institution{Stellaris AI Limited}
	\city{Hong Kong}
	\country{China}}

\author{Jikun Wu}
\affiliation{%
	\institution{Brain Investing Limited}
	\city{Hong Kong}
	\country{China}}

\renewcommand{\shortauthors}{Liu et al.}

\begin{abstract}
Global self-attention has become the engine of modern graph transformers, yet
the softmax operator at its core imposes a structural constraint that is rarely
examined directly: every attention row is non-negative and sums to one, so the
per-head output is a \emph{mass-conserving} convex combination of value vectors.
A node can therefore never ``attend to nothing,'' even when no informative
neighbour exists. We argue that this conservation constraint is a single
root cause behind three pathologies usually studied in isolation: the
progressive collapse of node representations with depth (over-smoothing), a
low-rank bottleneck on per-head outputs, and brittle optimization in deep
stacks. Building on the observation that element-wise sigmoid gating removes the
analogous attention-sink behaviour in large language models, we introduce
\siggate{}, a graph transformer that applies a learned, per-head, input-conditioned
sigmoid gate to the attention output inside the GraphGPS framework. The gate acts
as a smooth, per-dimension ``volume control'' that can drive head outputs toward
zero, relaxing the conservation constraint without abandoning the probabilistic
interpretation of attention. We show analytically and through calibrated synthetic
experiments that the gate strictly increases the stable rank of per-head outputs,
and we connect this rank gain to all three manifestations above. On five standard
molecular and long-range benchmarks, \siggate{} matches the prior best on ZINC
($0.059$ MAE), records the strongest result among the graph-transformer baselines
we evaluate on ogbg-molhiv ($82.47\%$ ROC-AUC), and is competitive on ogbg-molpcba
and the Long-Range Graph Benchmark, with statistically significant gains over
GraphGPS on all five datasets ($p<0.05$). Mechanism analyses confirm the diagnosis:
gating slows over-smoothing (a $30\%$ mean relative gain in representation diversity
across $4$--$16$ layers), keeps attention entropy from collapsing, and stabilizes
training across a $10\times$ learning-rate range, at about $1\%$ parameter overhead
on OGB and under $3\%$ wall-clock cost.
\end{abstract}

\begin{CCSXML}
<ccs2012>
   <concept>
       <concept_id>10010147.10010257.10010258.10010259.10010263</concept_id>
       <concept_desc>Computing methodologies~Neural networks</concept_desc>
       <concept_significance>500</concept_significance>
   </concept>
   <concept>
       <concept_id>10010147.10010257.10010258.10010261</concept_id>
       <concept_desc>Computing methodologies~Learning latent representations</concept_desc>
       <concept_significance>300</concept_significance>
   </concept>
   <concept>
       <concept_id>10010147.10010257.10010293.10010294</concept_id>
       <concept_desc>Computing methodologies~Supervised learning by classification</concept_desc>
       <concept_significance>100</concept_significance>
   </concept>
</ccs2012>
\end{CCSXML}

\ccsdesc[500]{Computing methodologies~Neural networks}
\ccsdesc[300]{Computing methodologies~Learning latent representations}
\ccsdesc[100]{Computing methodologies~Supervised learning by classification}

\keywords{Graph Transformers, Gated Attention, Capacity Control, Over-Smoothing,
Effective Rank, Molecular Property Prediction}

\maketitle

\section{Introduction}
\label{sec:intro}

Graph-structured data underpins much of computational chemistry, drug discovery,
and the physical sciences, where molecules, proteins, and reaction networks are
naturally expressed as attributed graphs~\cite{gilmer2017neural}. For most of the
past decade the dominant paradigm has been message passing neural networks
(MPNNs)~\cite{gilmer2017neural, kipf2017semi, xu2019powerful}, which iteratively
aggregate information from local neighbourhoods. MPNNs are expressive enough for
many tasks, but their discriminative power is capped by the Weisfeiler--Leman
hierarchy~\cite{xu2019powerful}, and their ability to relate distant nodes
deteriorates as the graph diameter grows, an effect formalized as
over-squashing~\cite{alon2021bottleneck, topping2022understanding}. Graph
transformers sidestep both limitations by replacing, or supplementing, local
aggregation with global self-attention~\cite{vaswani2017attention, ying2021graphormer},
allowing any pair of nodes to exchange information in a single layer. This recipe,
crystallized in the modular GraphGPS framework~\cite{rampavsek2022recipe} and refined
by GRIT~\cite{ma2023grit} and Exphormer~\cite{shirzad2023exphormer}, now sets the
state of the art on molecular and long-range benchmarks.

The workhorse inside every one of these models is the same softmax-normalized
attention introduced for sequences~\cite{vaswani2017attention}. We take that
ubiquity as an invitation to look closely at a property of softmax that is so
familiar it usually goes unremarked. For each query node $i$, softmax produces a
weight vector $\bm{a}_i$ whose entries are non-negative and sum to one; the head
output for node $i$ is therefore $\sum_j a_{ij}\bm{v}_j$, a convex combination of
value vectors. We call this the \emph{conservation constraint}: attention conserves
unit mass and can only ever \emph{redistribute} it among neighbours. There is no
configuration of logits under which a head abstains, returning a smaller-norm or
empty output when the neighbourhood carries no relevant signal. In a sentence, soft
attention is forced to attend somewhere.

On graphs this constraint is costly in a way it is not for natural language. In a
sentence, frequent function words act as natural anchors that absorb surplus
attention mass, the ``attention sinks'' documented in large language
models~\cite{xiao2024efficient}. A generic molecular or peptide graph has no such
anchor: a priori all $n$ nodes are interchangeable attention targets, and most of
the $O(n^2)$ node pairs are chemically unrelated. The conservation constraint then
forces every head to spread positive weight over pairs that should ideally receive
none. We argue in Section~\ref{sec:constraint} that three failure modes normally
treated as separate research problems are in fact downstream symptoms of this one
constraint:
\begin{itemize}[leftmargin=1.4em,itemsep=1pt]
\item \textbf{Representational collapse with depth.} Because every layer conserves
mass, every layer performs some averaging; stacking layers drives node
representations toward a common vector, the over-smoothing
phenomenon~\cite{li2018deeper, oono2020graph, rusch2023survey, choi2024graph}.
\item \textbf{A low-rank output bottleneck.} A row-stochastic mixing of $\bm{V}$
cannot increase, and typically decreases, the effective dimensionality of the
per-head output, limiting how finely heads can discriminate fine-grained
structure~\cite{qiu2025gated}.
\item \textbf{Optimization brittleness.} With output magnitude tied to the value
vectors and no independent attenuation path, deep attention stacks are sensitive to
learning-rate choice and prone to entropy collapse~\cite{zhai2023stabilizing}.
\end{itemize}

Recent work in language modeling suggests a strikingly economical remedy. Qiu et
al.~\cite{qiu2025gated} swept thirty placements of a gating function inside the
attention block and found that a head-specific, element-wise \emph{sigmoid} gate
applied to the attention output removes attention sinks, sparsifies activations,
restores non-linearity to the value pathway, and improves stability and length
extrapolation. The gate multiplies the output by a learned factor in $(0,1)$ per
dimension, giving each head a switch it can close. Crucially, this gate does not
touch the softmax: the attention weights remain a valid distribution, so the
mechanism is a modulation of \emph{how much} of the attended signal propagates
rather than a replacement of \emph{where} attention looks. Gating itself has a long
pedigree in deep learning, from GLU variants in feed-forward
blocks~\cite{shazeer2020glu} to gated recurrent updates~\cite{li2016gated} and gated
linear attention~\cite{yang2024gated, hua2022transformer}.

Gating on graphs is not new either, but the existing mechanisms operate at the wrong
level for the constraint we have identified. GatedGCN~\cite{bresson2017residual}
gates individual edges before local aggregation; GGNN~\cite{li2016gated} gates the
node update with a GRU; GaAN~\cite{zhang2018gaan} learns a scalar importance per
attention head from local features. All of these modulate \emph{local} message
passing, and none can relax the conservation constraint of a \emph{global},
all-pairs softmax. What is missing is an element-wise gate on the global attention
output of a graph transformer, the exact configuration that proved optimal for
language models, and the one our diagnosis predicts should help.

We close this gap with \siggate{}, a graph transformer that applies a learned,
per-head, input-conditioned sigmoid gate to the global attention output within
GraphGPS (Figure~\ref{fig:architecture}). The gate is a single additional
projection per head followed by a sigmoid and a Hadamard product; it adds about
$1\%$ of parameters and under $3\%$ of training time. Reframing the contribution
around capacity control, rather than around any single pathology, lets us make a
unified empirical and theoretical case: the same gate that we prove raises the
stable rank of per-head outputs is the gate that slows over-smoothing, preserves
attention entropy, and tames learning-rate sensitivity.

\paragraph{Contributions.}
\begin{itemize}[leftmargin=1.4em,itemsep=1pt]
\item We identify the softmax conservation constraint as a common root cause of
over-smoothing, the low-rank output bottleneck, and optimization brittleness in
graph transformers, and we argue why graphs suffer from it more acutely than
sequences (Section~\ref{sec:constraint}).
\item We introduce \siggate{}, to our knowledge the first graph transformer to place
an element-wise sigmoid gate on the global attention output, and we show the
mechanism is backbone-agnostic, porting unchanged to any softmax-attention graph
transformer (Section~\ref{sec:method}).
\item We give an effective-rank analysis with a proof that row-stochastic mixing
bounds stable rank, a proof that element-wise gating breaks that bound, and a
calibrated synthetic validation that is robust across attention-concentration and
sparsity regimes (Section~\ref{sec:rank}).
\item Across five molecular and long-range benchmarks, with paired significance
testing, we match the prior best on ZINC, record the strongest graph-transformer
result we observe on ogbg-molhiv, and remain competitive elsewhere; extensive
mechanism analyses and ablations connect every gain back to the diagnosis
(Sections~\ref{sec:results}--\ref{sec:mechanism}).
\end{itemize}

\section{Background and Related Work}
\label{sec:related}

\subsection{Graph Transformers: From Local Attention to Global Mixing}

Attention reached graphs first in local form. GAT~\cite{velickovic2018graph}
attached learned attention coefficients to edges within a node's neighbourhood, and
GATv2~\cite{brody2022attentive} later showed that the original formulation computed
only a restricted ``static'' attention and proposed a strictly more expressive
variant. These models still aggregate locally and inherit the depth limitations of
message passing. The shift to \emph{global} attention began with fully connected
graph transformers~\cite{dwivedi2021generalization, yun2019graph} and matured with
Graphormer~\cite{ying2021graphormer}, which injected centrality and shortest-path
biases into the attention logits, and SAN~\cite{kreuzer2021rethinking}, which used
spectral information to condition attention. GraphGPS~\cite{rampavsek2022recipe}
then unified positional and structural encodings, a local MPNN branch, and a global
attention branch into a single modular layer, which has become the de facto backbone
for the area. Subsequent designs have pushed in two directions: removing message
passing in favour of learned relative encodings, as in GRIT~\cite{ma2023grit}, and
making global attention scalable, as in Exphormer~\cite{shirzad2023exphormer} with
expander-graph sparsification, TokenGT~\cite{kim2022pure} treating nodes and edges as
tokens of a pure transformer, and the linear-time families
NodeFormer~\cite{wu2022nodeformer}, SGFormer~\cite{wu2024sgformer},
NAGphormer~\cite{chen2023nagphormer}, and Polynormer~\cite{deng2024polynormer} aimed
at very large graphs. Cai et al.~\cite{cai2023connection} formally related MPNNs and
graph transformers, casting message passing as attention with a restricted support.
Two recent surveys map the design space in
detail~\cite{min2022transformer, muller2024attending}. Despite this breadth, every
model above retains a row-stochastic softmax at the heart of its attention, and
none places an element-wise gate on the global attention output.

\subsection{Representational Pathologies in Deep Graph Models}

Over-smoothing was first described by Li et al.~\cite{li2018deeper} as a form of
Laplacian smoothing, with Oono and Suzuki~\cite{oono2020graph} proving exponential
convergence of representations to a low-dimensional subspace and
Rusch et al.~\cite{rusch2023survey} surveying the area. Standard remedies attack the
symptom rather than the constraint: DropEdge~\cite{rong2020dropedge} randomly removes
edges, PairNorm~\cite{zhao2020pairnorm} renormalizes to preserve pairwise distances,
DeeperGCN~\cite{li2020deepergcn} adds residual and normalization machinery, and
APPNP~\cite{gasteiger2019predict} decouples propagation from transformation via a
personalized-PageRank operator. The dual pathology of over-squashing, in which
information from distant nodes is exponentially compressed, was raised by Alon and
Yahav~\cite{alon2021bottleneck}, characterized via Ricci curvature by Topping et
al.~\cite{topping2022understanding}, and mitigated by delayed and dynamically rewired
message passing~\cite{gutteridge2023drew}. On the attention side specifically, Zhai
et al.~\cite{zhai2023stabilizing} showed that attention entropy can collapse
exponentially during training, and Choi et al.~\cite{choi2024graph} demonstrated that
softmax attention itself induces over-smoothing in transformers broadly. Our
contribution is to view these threads as a single phenomenon, and to address it at
the level of the operator that creates it rather than at the level of the graph or
the normalization.

\subsection{Gating and Alternative Attention Across Architectures}

Gating is one of the most reliable inductive biases in deep learning. In
feed-forward blocks, GLU variants~\cite{shazeer2020glu} multiply a linear branch by a
gated branch and consistently improve transformers. In recurrent and linear-attention
settings, gates control how much state is retained, as in
GGNN~\cite{li2016gated}, gated linear attention~\cite{yang2024gated}, and the gated
attention unit of FLASH~\cite{hua2022transformer}. On graphs, gating has so far been
\emph{local}: GatedGCN~\cite{bresson2017residual} applies a sigmoid gate to each edge
message, and GaAN~\cite{zhang2018gaan} learns a soft scalar weight per attention head
from a node's local neighbourhood. We compare against GaAN explicitly in
Section~\ref{sec:discussion}, since it is the closest prior notion of ``gated
attention'' on graphs; the key differences are that GaAN gates whole heads with a
single scalar derived from local features and targets node-level spatiotemporal
tasks, whereas \siggate{} applies a per-dimension gate to the \emph{global} attention
output and is motivated by relaxing the conservation constraint.

A separate line replaces softmax altogether. Ramapuram et
al.~\cite{ramapuram2024sigmoid} analyze sigmoid self-attention, which drops
normalization and produces unbounded, non-stochastic weights, and Wortsman et
al.~\cite{wortsman2023replacing} study ReLU attention in vision transformers.
\siggate{} is deliberately conservative by comparison: it \emph{keeps} softmax
intact, so attention weights remain a distribution over nodes, and adds a bounded
$[0,1]$ gate \emph{after} the weighted sum. This preserves the probabilistic
semantics that biased graph attention relies on while still granting heads the
ability to abstain. The mechanism we adopt is the one Qiu et
al.~\cite{qiu2025gated} identified as optimal among thirty gating variants for
language models; our work asks whether that finding transfers to graphs and why the
graph setting should benefit even more than the sequence setting.

\section{The Conservation Constraint of Softmax Attention}
\label{sec:constraint}

This section develops the diagnosis that organizes the rest of the paper. We first
state the constraint precisely, then trace its three consequences, and finally
explain why graphs are especially exposed to it. The mechanism that resolves the
constraint is deferred to Section~\ref{sec:method} and its rank consequences to
Section~\ref{sec:rank}.

\subsection{The Constraint}

Let $\bm{A}\in\RR^{n\times n}$ be a single attention head's weight matrix, with
$\bm{A}=\mathrm{softmax}(\bm{Q}\bm{K}^\top/\sqrt{d_k})$, and let $\bm{V}\in\RR^{n\times d_k}$
be its values. By construction $\bm{A}$ is \emph{row-stochastic}: $A_{ij}\ge 0$ and
$\sum_j A_{ij}=1$ for every $i$. The head output $\bm{Y}=\bm{A}\bm{V}$ therefore
places row $i$ inside the convex hull of the rows of $\bm{V}$. Two facts follow
immediately and matter throughout. First, $\|\bm{Y}_{i\cdot}\|$ is bounded by the
largest value-row norm and cannot be driven to zero by any choice of logits unless
the relevant value rows are themselves zero; a head cannot cheaply ``output nothing.''
Second, convex mixing is a contraction toward the mean: it can only reduce, never
amplify, the spread of representations along directions in which $\bm{V}$ varies.
These are not artifacts of a particular architecture; they are properties of any
row-stochastic operator and hence of \emph{every} softmax attention head.

\subsection{Manifestation I: Representational Collapse}

Stacking $L$ attention layers composes $L$ row-stochastic contractions (interleaved
with residuals and feed-forward maps). In the absence of a counteracting force, the
node representations $\bm{H}^{(\ell)}$ converge toward a rank-one configuration as
$\ell$ grows, the over-smoothing established for graph
networks~\cite{li2018deeper, oono2020graph} and shown to hold for softmax attention
itself~\cite{choi2024graph}. Residual connections and normalization slow but do not
eliminate the collapse, because the smoothing term is reintroduced at every layer and
the sum-to-one constraint guarantees that \emph{some} averaging always occurs, even in
layers where it is harmful. A mechanism that lets a head bypass the averaging step on
a per-dimension basis would interrupt this composition; we show in
Section~\ref{sec:mechanism} that gating does exactly this, retaining $73\%$ of initial
representation diversity at $16$ layers versus $51\%$ for the ungated backbone.

\subsection{Manifestation II: The Low-Rank Output Bottleneck}

The contraction has a spectral counterpart. For a row-stochastic $\bm{A}$, the stable
rank of $\bm{Y}=\bm{A}\bm{V}$ obeys
$\mathrm{srank}(\bm{Y})\le\min(\mathrm{srank}(\bm{A}),\mathrm{srank}(\bm{V}))$, and
sharp softmax logits make $\bm{A}$ itself low stable-rank, tightening the bound
further. Per-head outputs are thus confined to a low-dimensional slice of value
space, which limits a head's ability to encode the fine structural distinctions that
graph-level tasks reward~\cite{xu2019powerful, qiu2025gated}. The full statement and
proofs are in Section~\ref{sec:rank}; the point here is conceptual. The bottleneck is
not a capacity problem that more parameters would fix, since it stems from the
\emph{form} of the operator, not its size. We confirm this directly with a
parameter-matched control in Section~\ref{sec:results}, where simply widening the
feed-forward block fails to recover the gain that gating provides.

\subsection{Manifestation III: Optimization Brittleness}

Because the output magnitude of a softmax head is pinned to the value vectors, the only
way for the network to suppress an unhelpful head is to drive its value projection
toward zero, an awkward target for gradient descent that couples the head's content
with its on/off behaviour. The result is sensitivity to optimization choices: large
learning rates produce gradient spikes through the attention pathway, and attention
distributions can collapse toward degenerate
configurations~\cite{zhai2023stabilizing}. A multiplicative gate decouples ``how much''
from ``what,'' giving the optimizer a smooth, bounded dial. Its derivative
$\sigma'(z)=\sigma(z)(1-\sigma(z))$ is bounded by $1/4$, which dampens both vanishing
and exploding gradients through the gated path and complements layer
normalization~\cite{ba2016layer, xiong2020layer}. Section~\ref{sec:mechanism}
quantifies the effect as a sevenfold reduction in learning-rate sensitivity.

\subsection{Why Graphs Are Especially Exposed}

The same constraint is comparatively benign in language modeling because sequences
provide natural sinks. Positional structure and high-frequency tokens give the model
designated anchors that can absorb surplus attention mass without corrupting content
representations~\cite{xiao2024efficient}. Graphs offer no such relief. Every node is a
priori an equivalent attention target, so there is no canonical place to dump unwanted
mass, and the fraction of genuinely informative node pairs shrinks as $1/n$ while the
number of pairs the head must weight grows as $n^2$. The conservation constraint
therefore forces graph attention to assign positive weight to a quadratically growing
set of largely irrelevant pairs. This predicts, correctly, that the benefit of an
abstention mechanism should be largest precisely on small, dense molecular graphs
where the ratio of uninformative to informative pairs is high, which is exactly the
regime where \siggate{} posts its biggest gains (ZINC and ogbg-molhiv in
Section~\ref{sec:results}).

\section{Method: Capacity-Controlled Attention}
\label{sec:method}

\subsection{Preliminaries: The GraphGPS Layer}

We build on GraphGPS~\cite{rampavsek2022recipe}, which processes an attributed graph
$G=(V,E)$ with node features $\bm{x}_i\in\RR^{d}$ and edge features
$\bm{e}_{ij}\in\RR^{d_e}$. Each layer $\ell$ fuses a local message-passing branch and a
global attention branch:
\begin{equation}
\bm{h}_i^{(\ell)}=\mathrm{MLP}^{(\ell)}\!\Big(\bm{h}_i^{(\ell-1)}+\mathrm{MPNN}^{(\ell)}(\cdot)+\mathrm{GlobalAttn}^{(\ell)}(\cdot)\Big),
\label{eq:gps}
\end{equation}
where $\bm{h}_i^{(0)}$ is the node feature augmented with positional and structural
encodings. The global branch $\mathrm{GlobalAttn}^{(\ell)}$ of Equation~\eqref{eq:gps}
is standard multi-head self-attention:
\begin{equation}
\begin{aligned}
\mathrm{MHSA}(\bm{H})&=\mathrm{Concat}(\mathrm{head}_1,\dots,\mathrm{head}_K)\bm{W}^O,\\
\mathrm{head}_k&=\mathrm{softmax}\!\left(\frac{\bm{Q}_k\bm{K}_k^\top}{\sqrt{d_k}}\right)\bm{V}_k,
\end{aligned}
\label{eq:mhsa}
\end{equation}
with $\bm{Q}_k=\bm{H}\bm{W}_k^Q$, $\bm{K}_k=\bm{H}\bm{W}_k^K$, $\bm{V}_k=\bm{H}\bm{W}_k^V$,
projections $\bm{W}_k^Q,\bm{W}_k^K,\bm{W}_k^V\in\RR^{d\times d_k}$, $d_k=d/K$, and $K$
heads. Equation~\eqref{eq:mhsa} is the locus of the conservation constraint, and the
only place \siggate{} modifies.

\subsection{The SigGate Output Gate}

For each head $k$ we introduce a learned, input-conditioned gate $\bm{g}_k$ applied
element-wise to the attention output:
\begin{equation}
\mathrm{head}_k^{\text{gated}}=\underbrace{\mathrm{softmax}\!\left(\frac{\bm{Q}_k\bm{K}_k^\top}{\sqrt{d_k}}\right)\bm{V}_k}_{\text{standard SDPA output }\bm{Y}_k}\ \odot\ \underbrace{\sigma\!\left(\bm{H}\bm{W}_k^g+\bm{b}_k^g\right)}_{\text{sigmoid gate }\bm{g}_k},
\label{eq:siggate}
\end{equation}
where $\bm{W}_k^g\in\RR^{d\times d_k}$ and $\bm{b}_k^g\in\RR^{d_k}$ are head-specific
parameters, $\sigma$ is the logistic sigmoid, and $\odot$ is the Hadamard product. The
gated heads are concatenated and projected as usual:
\begin{equation}
\text{SigGate-MHSA}(\bm{H})=\mathrm{Concat}(\mathrm{head}_1^{\text{gated}},\dots,\mathrm{head}_K^{\text{gated}})\bm{W}^O.
\label{eq:siggate_mhsa}
\end{equation}
The construction is deliberately minimal: it conditions the gate on the same input
$\bm{H}$ as the queries and keys, leaves the softmax untouched, and introduces no
interaction with positional encodings, edge features, or the MPNN branch.
Algorithm~\ref{alg:siggate} gives the forward pass for the gated attention of
Equation~\eqref{eq:siggate_mhsa}. We initialize $\bm{b}_k^g=0.5$ so
that gates start near $\sigma(0.5)\approx 0.62$, biased open enough for gradients to
flow freely early in training while leaving ample room to close.

\begin{algorithm}[t]
\caption{\siggate{} gated multi-head self-attention (forward pass for one layer).}
\label{alg:siggate}
\begin{algorithmic}[1]
\Require Node features $\bm{H}\in\RR^{n\times d}$; number of heads $K$ with $d_k=d/K$;
per-head projections $\bm{W}_k^Q,\bm{W}_k^K,\bm{W}_k^V\in\RR^{d\times d_k}$;
gate parameters $\bm{W}_k^g\in\RR^{d\times d_k}$, $\bm{b}_k^g\in\RR^{d_k}$;
output projection $\bm{W}^O$.
\Ensure $\text{SigGate-MHSA}(\bm{H})\in\RR^{n\times d}$.
\For{$k=1$ \textbf{to} $K$}
  \State $\bm{Q}_k\gets\bm{H}\bm{W}_k^Q,\quad
          \bm{K}_k\gets\bm{H}\bm{W}_k^K,\quad
          \bm{V}_k\gets\bm{H}\bm{W}_k^V$
  \State $\bm{Y}_k\gets\mathrm{softmax}\!\big(\bm{Q}_k\bm{K}_k^\top/\sqrt{d_k}\big)\,\bm{V}_k$
         \Comment{standard SDPA output}
  \State $\bm{g}_k\gets\sigma\!\big(\bm{H}\bm{W}_k^g+\bm{b}_k^g\big)$
         \Comment{input-conditioned gate, $\bm{g}_k\in(0,1)^{d_k}$}
  \State $\mathrm{head}_k^{\text{gated}}\gets\bm{Y}_k\odot\bm{g}_k$
         \Comment{element-wise (Hadamard) gating}
\EndFor
\State \Return $\mathrm{Concat}\big(\mathrm{head}_1^{\text{gated}},\dots,
        \mathrm{head}_K^{\text{gated}}\big)\,\bm{W}^O$
\end{algorithmic}
\end{algorithm}

\begin{figure*}[t]
	\centering
	\definecolor{sgAccent}{RGB}{208,74,2}   
	\definecolor{sgBlue}{RGB}{31,86,156}     
	\definecolor{sgGray}{RGB}{84,86,92}      
	\begin{tikzpicture}[
		>={Latex[length=1.6mm,width=1.5mm]},
		font=\small,
		every node/.style={align=center},
		blk/.style={draw=#1, line width=0.6pt, rounded corners=2.5pt, fill=#1!7,
			minimum height=0.62cm, minimum width=0.92cm, inner xsep=4pt, inner ysep=2.5pt},
		blk/.default=sgGray,
		gate/.style={draw=sgAccent, line width=1pt, rounded corners=2.5pt, fill=sgAccent!13,
			minimum height=0.62cm, inner xsep=4pt, inner ysep=2.5pt, font=\small\bfseries,
			text=sgAccent!75!black},
		op/.style={circle, draw=sgGray, line width=0.7pt, fill=white, inner sep=1.6pt, font=\small},
		flow/.style={->, line width=0.8pt, draw=sgGray},
		cond/.style={->, line width=0.9pt, draw=sgAccent, dash pattern=on 2.6pt off 1.8pt},
		elab/.style={font=\scriptsize, inner sep=1.6pt},
		]
		\node[blk=sgGray] (input) at (0,0)        {Input\\[-1pt]$\bm{H}$};
		\node[blk=sgBlue] (mpnn)  at (3.05,1.15)  {Local MPNN};
		\node[blk=sgBlue] (qkv)   at (2.30,-1.15) {$\bm{Q},\bm{K},\bm{V}$};
		\node[blk=sgBlue] (attn)  at (4.15,-1.15) {Softmax\\[-1pt]Attn};
		\node[gate]       (gate)  at (5.95,-1.15) {$\sigma$-Gate};
		\node[op]         (had)   at (7.05,-1.15) {$\odot$};
		\node[blk=sgBlue] (wo)    at (8.05,-1.15) {$\bm{W}^{\!O}$};
		\node[op, minimum size=0.52cm] (add) at (9.35,0) {\large$+$};
		\node[blk=sgGray] (ln1)   at (10.5,0)     {LN};
		\node[blk=sgBlue] (ffn)   at (11.6,0)     {FFN};
		\node[blk=sgGray] (ln2)   at (12.7,0)     {LN};
		\node[blk=sgGray] (out)   at (13.95,0)    {Output};
		\begin{scope}[on background layer]
			\node[draw=sgAccent!65, line width=0.9pt, dash pattern=on 3pt off 2pt, rounded corners=5pt,
			fill=sgAccent!5, fit=(qkv)(attn)(gate)(had)(wo),
			inner xsep=8pt, inner ysep=9pt] (panel) {};
		\end{scope}
		\node[font=\footnotesize\bfseries, text=sgAccent!85!black, anchor=south]
		at ([yshift=1.5pt]panel.north)
		{SigGate-MHSA\, \textnormal{\itshape\footnotesize\color{sgGray}(the only block that differs from GraphGPS)}};
		\coordinate (fan) at ([xshift=0.55cm]input.east);
		\draw[line width=0.8pt, draw=sgGray] (input.east) -- (fan);
		\fill[sgGray] (fan) circle (1.3pt);
		\draw[flow] (fan) |- (mpnn.west);
		\draw[flow] (fan) |- (qkv.west);
		\draw[flow] (qkv)  -- (attn);
		\draw[flow] (attn) -- node[elab, above]{$\bm{Y}_k$} (gate);
		\draw[flow] (gate) -- (had);
		\draw[flow] (had)  -- (wo);
		\draw[flow] (mpnn.east) -| (add.north);
		\draw[flow] (wo.east)   -| (add.south);
		\draw[flow] (add) -- (ln1);
		\draw[flow] (ln1) -- (ffn);
		\draw[flow] (ffn) -- (ln2);
		\draw[flow] (ln2) -- (out);
		\draw[cond] (fan) |- ([yshift=-1.05cm]gate.south)
		-- node[elab, right, pos=0.55, text=sgAccent!75!black]{$\bm{g}_k$} (gate.south);
	\end{tikzpicture}
	\caption{A single \siggate{} layer. The per-head sigmoid gate (orange,
		$\sigma$-Gate) rescales the softmax-attention output $\bm{Y}_k$ element-wise
		through a factor $\bm{g}_k\in(0,1)^{d_k}$ computed from $\bm{H}$ (dashed orange
		path), \emph{before} the output projection $\bm{W}^O$. Everything outside the
		orange panel, the local MPNN branch, the residual sum~$(+)$, layer
		normalization, and the feed-forward block, is identical to a standard GraphGPS
		layer.}
	\label{fig:architecture}
\end{figure*}

\subsection{Design Rationale}

Three design choices distinguish \siggate{}, each examined empirically in
Section~\ref{sec:mechanism}. \textit{Placement.} The gate acts on the attention
\emph{output} (position G1), after the weighted sum, rather than on the values before
the sum (G2) or inside the logits (G3). Output gating is the only placement that
relaxes the conservation constraint without disturbing the attention distribution:
gating the logits (G3) renormalizes through the softmax and merely reshapes a still
mass-conserving distribution, while value gating (G2) is reabsorbed into the convex
combination. This mirrors the optimum reported for language
models~\cite{qiu2025gated}. \textit{Per-head parameterization.} Each head receives its
own gate projection $\bm{W}_k^g$, so heads can specialize, some learning to suppress
aggressively and others to pass through. \textit{Bounded, non-negative, saturating
activation.} The sigmoid keeps the gate in $[0,1]$, acting as a smooth volume control;
signed activations such as $\tanh$ can flip the sign of attended content and
destabilize training, while unbounded activations such as ReLU forfeit the bounded
``how much'' interpretation. Section~\ref{sec:mechanism} confirms this ranking
empirically.

\subsection{Parameter and Computational Overhead}

The only new parameters are the gate projections $\{\bm{W}_k^g,\bm{b}_k^g\}_{k=1}^{K}$
per layer, costing $d\times d_k+d_k$ per head per layer. For our OGB configuration
($d=256$, $K=8$, $L=5$) this is roughly $329$K parameters, about $1.1\%$ of the model;
on ZINC ($d=64$) the gate adds about $42$K parameters, counted inside the $500$K budget.
Per layer, the gate is one $d\times d_k$ matrix multiply followed by a pointwise sigmoid
and a Hadamard product per head, an $O(n d^2/K)$ cost per head that is dominated by the
$O(n^2 d)$ attention it modulates. Measured wall-clock overhead is under $3\%$ in every
configuration (Section~\ref{sec:results}, Table~\ref{tab:compute}).

\subsection{Portability Across Backbones}
\label{sec:portability}

Because \siggate{} touches only the per-head computation between the SDPA output and
the projection $\bm{W}^O$, any architecture whose global-attention module produces an
output of the form $\mathrm{softmax}(\cdot)\bm{V}$ is a valid host; the gate needs no
access to positional encodings, edge features, or the MPNN branch. We sketch two
representative ports. \textsc{Exphormer}~\cite{shirzad2023exphormer} computes sparse
attention over a union of expander graphs, local neighbourhoods, and virtual nodes, but
still applies softmax within each head over the selected pairs; the gating equation~\eqref{eq:siggate},
$\mathrm{head}_k^{\text{gated}}=(\bm{A}_k\bm{V}_k)\odot\sigma(\bm{H}\bm{W}_k^g+\bm{b}_k^g)$,
applies verbatim, with $\bm{A}_k$ merely sparser. A row-sparse $\bm{A}$ is closer to
deterministic selection, which tightens the rank bound of Section~\ref{sec:rank} and
should make the relative rank gain from gating modestly larger.
\textsc{Graphormer}~\cite{ying2021graphormer} adds centrality and spatial biases inside
the softmax, so its logits become
$\bm{Q}\bm{K}^\top/\sqrt{d_k}+\bm{b}^{\text{centr}}+\bm{b}^{\text{spatial}}$, but the
output remains row-stochastic and the gate again applies unchanged; the spatial bias
tends to peak $\bm{A}$ on graph-distance-one neighbours, which tightens the bound and
gives the gate more headroom, though it also raises the risk that the gate partially
duplicates the spatial bias and so yields a smaller marginal gain. We expect the
optimal gate-bias initialization to track the host's $\bm{W}^O$ scale, and the
absolute improvement to depend on how much regularization the backbone already carries.
A full cross-backbone study is the natural next step (Section~\ref{sec:limitations});
the present paper establishes the mechanism and its theory on the GraphGPS backbone.

\section{Effective-Rank Analysis}
\label{sec:rank}

This section formalizes the low-rank bottleneck of Section~\ref{sec:constraint} and
proves that element-wise gating escapes it. We then validate the prediction with a
calibrated synthetic study and show it is robust across regimes.

\subsection{The Stable-Rank Metric}

For $\bm{M}\in\RR^{n\times d_k}$ with singular values
$s_1\ge s_2\ge\cdots\ge s_r\ge 0$, the \emph{stable rank} is
\begin{equation}
\mathrm{srank}(\bm{M})=\frac{\|\bm{M}\|_F^2}{\|\bm{M}\|_2^2}=\frac{\sum_i s_i^2}{s_1^2}\in[1,\,\mathrm{rank}(\bm{M})].
\label{eq:srank}
\end{equation}
Unlike the discrete rank, stable rank is continuous in $\bm{M}$ and downweights small
singular values smoothly, making it a faithful measure of ``effective''
dimensionality~\cite{qiu2025gated}.

\subsection{A Rank Bound for Softmax Attention}

\begin{proposition}[Row-stochastic mixing contracts stable rank]
\label{prop:bound}
Let $\bm{A}\in\RR^{n\times n}$ be row-stochastic and $\bm{V}\in\RR^{n\times d_k}$. Then
$\bm{Y}=\bm{A}\bm{V}$ satisfies
\begin{equation}
\mathrm{srank}(\bm{Y})\le\min\!\big(\mathrm{srank}(\bm{A}),\,\mathrm{srank}(\bm{V})\big),
\label{eq:bound}
\end{equation}
and each row of $\bm{Y}$ lies in the convex hull of the rows of $\bm{V}$.
\end{proposition}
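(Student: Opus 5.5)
The convex-hull claim is immediate, so I will dispose of it first. Row $i$ of $\bm{Y}=\bm{A}\bm{V}$ is $\bm{Y}_{i\cdot}=\sum_j A_{ij}\bm{V}_{j\cdot}$. Row-stochasticity gives $A_{ij}\ge 0$ and $\sum_j A_{ij}=1$, so this is by definition a convex combination of the rows of $\bm{V}$. Two consequences follow directly. By the triangle inequality, $\|\bm{Y}_{i\cdot}\|\le\max_j\|\bm{V}_{j\cdot}\|$. By Jensen's inequality applied to the convex map $\bm{x}\mapsto(\bm{u}^\top\bm{x}-c)^2$, the spread along any direction $\bm{u}$ cannot exceed that of $\bm{V}$. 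These are the two "immediate facts" of Section~\ref{sec:constraint}.

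For the spectral part, my plan is to prove the standard submultiplicative inequalities and assemble them. The Frobenius bounds are $\|\bm{Y}\|_F\le\|\bm{A}\|_2\|\bm{V}\|_F$ and $\|\bm{Y}\|_F\le\|\bm{A}\|_F\|\bm{V}\|_2$. Dividing by $\|\bm{Y}\|_2^2$ gives
\[
\mathrm{srank}(\bm{Y})\le\frac{\|\bm{A}\|_2^2\|\bm{V}\|_2^2}{\|\bm{Y}\|_2^2}\,\min\big(\mathrm{srank}(\bm{A}),\mathrm{srank}(\bm{V})\big).
\]
The whole difficulty is therefore the prefactor $\|\bm{A}\|_2^2\|\bm{V}\|_2^2/\|\bm{A}\bm{V}\|_2^2$, which is always $\ge 1$. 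The bound (\ref{eq:bound}) holds exactly when $\bm{A}$ does not shrink the top singular direction of $\bm{V}$ relative to the rest. I expect this to be the main obstacle, and I do not believe it can be overcome in general.

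The first thing I would try is to test (\ref{eq:bound}) on small cases, and doing so produces a counterexample. Take $n=4$, $d_k=2$, and let $\bm{V}$ have rows $(M,0),(-M,0),(0,1),(1,0)$, so that $\mathrm{srank}(\bm{V})=(2M^2+2)/(2M^2+1)\to 1$ as $M\to\infty$. Take $\bm{A}$ with rows $\bm{e}_3,\bm{e}_4,\tfrac12(\bm{e}_1+\bm{e}_2),\tfrac12(\bm{e}_1+\bm{e}_2)$, which is row-stochastic. Then $\bm{Y}$ has rows $(0,1),(1,0),(0,0),(0,0)$ and $\mathrm{srank}(\bm{Y})=2>\mathrm{srank}(\bm{V})$. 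The mechanism is that averaging cancels the dominant direction of $\bm{V}$. Hence the statement as written cannot be proved, and my proposal is to replace it with correct versions.

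The first correct version is the unconditional bound $\mathrm{srank}(\bm{Y})\le\mathrm{rank}(\bm{Y})\le\min(\mathrm{rank}\,\bm{A},\mathrm{rank}\,\bm{V})$. The second is the conditional bound displayed above, which reduces to (\ref{eq:bound}) under the hypothesis $\|\bm{A}\bm{V}\|_2=\|\bm{A}\|_2\|\bm{V}\|_2$. That hypothesis holds, for example, when the top right-singular vector of $\bm{A}$ aligns with the top left-singular vector of $\bm{V}$; the all-ones vector in a nearly uniform $\bm{A}$ is a natural case. In a third, sharper regime, $\bm{A}$ is close to rank one ($\bm{A}\approx\bm{1}\bm{\pi}^\top$, as with sharp or near-uniform softmax). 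Then $\bm{Y}\approx\bm{1}(\bm{\pi}^\top\bm{V})$ has stable rank close to $1$, and a perturbation argument via Weyl's inequality bounds $\mathrm{srank}(\bm{Y})-1$ by $O(\|\bm{A}-\bm{1}\bm{\pi}^\top\|_2\|\bm{V}\|_F/\|\bm{\pi}^\top\bm{V}\|)$. This regime is what the surrounding narrative actually needs.
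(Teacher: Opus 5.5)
Your refutation is correct. The gap is in the paper's argument, not in yours.

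\textbf{The counterexample checks out.} $\bm{V}^\top\bm{V}=\mathrm{diag}(2M^2+1,\,1)$, so $\mathrm{srank}(\bm{V})=(2M^2+2)/(2M^2+1)$, which is $4/3$ at $M=1$. $\bm{A}\bm{A}^\top$ has eigenvalues $1,1,1,0$, so $\mathrm{srank}(\bm{A})=3$. And $\bm{Y}^\top\bm{Y}=\bm{I}_2$ gives $\mathrm{srank}(\bm{Y})=2>\min(3,4/3)$.

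\textbf{Where the paper's sketch breaks.} The sketch cites the two ingredients you examined, submultiplicativity and $\mathrm{rank}(\bm{A}\bm{V})\le\min(\mathrm{rank}\,\bm{A},\mathrm{rank}\,\bm{V})$, and asserts they give \eqref{eq:bound}. They cannot. Submultiplicativity bounds $\|\bm{A}\bm{V}\|_2$ from \emph{above}, but bounding the ratio $\|\bm{Y}\|_F^2/\|\bm{Y}\|_2^2$ needs a \emph{lower} bound on the denominator. That missing lower bound is exactly your prefactor $\|\bm{A}\|_2^2\|\bm{V}\|_2^2/\|\bm{A}\bm{V}\|_2^2\ge 1$. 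The rank inequality yields only your unconditional version $\mathrm{srank}(\bm{Y})\le\min(\mathrm{rank}\,\bm{A},\mathrm{rank}\,\bm{V})$. Only the convex-hull half of the proposition survives, and there your argument is the paper's.

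\textbf{The failure is broader than your example.} This matters for which corrected version should replace the statement.
\begin{itemize}
\item The $\mathrm{srank}(\bm{A})$ half also fails, in the near-uniform regime. Take $\bm{A}=(1-\epsilon)\tfrac{1}{n}\mathbf{1}\mathbf{1}^\top+\epsilon\bm{I}_n$. Its singular values are $1$ and $\epsilon$ (multiplicity $n-1$), so $\mathrm{srank}(\bm{A})=1+(n-1)\epsilon^2$. For $\bm{V}$ with orthonormal columns orthogonal to $\mathbf{1}$, $\bm{Y}=\epsilon\bm{V}$ and $\mathrm{srank}(\bm{Y})=d_k$. This exceeds $\mathrm{srank}(\bm{A})$ once $d_k\ge 2$ and $\epsilon^2<(d_k-1)/(n-1)$.
\item This $\bm{A}$ is strictly positive and doubly stochastic, hence a genuine softmax output. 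Your own example can also be made strictly positive via $(1-\delta)\bm{A}+\tfrac{\delta}{n}\mathbf{1}\mathbf{1}^\top$ for small $\delta>0$, since stable rank is continuous away from the zero matrix.
\item The hard-selection limit does not rescue the bound. The $0/1$ matrix with rows $\bm{e}_2,\bm{e}_3,\bm{e}_3$, applied to $\bm{V}$ with rows $(M,0),(0,1),(1,0)$, gives $\mathrm{srank}(\bm{Y})=3/2$. Meanwhile $\mathrm{srank}(\bm{A})=3/2$ and $\mathrm{srank}(\bm{V})=(M^2+2)/(M^2+1)<3/2$ for $M>1$.
\end{itemize}
So the paper's remark that sharp logits make \eqref{eq:bound} ``bite harder'' is unfounded. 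Your phrase ``sharp or near-uniform softmax'' should also be narrowed: sharp softmax gives $\bm{A}\approx\mathbf{1}\bm{\pi}^\top$ only when every row concentrates on the same column. Your third regime is sound, because putting $\|\bm{\pi}^\top\bm{V}\|$ in the denominator correctly makes cancellation of the mean direction the controlling quantity.

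\textbf{Smaller corrections.}
\begin{itemize}
\item Your Jensen remark holds row by row: $(\bm{u}^\top\bm{Y}_{i\cdot}-c)^2\le\sum_j A_{ij}(\bm{u}^\top\bm{V}_{j\cdot}-c)^2$. Summing over $i$, however, weights row $j$ of $\bm{V}$ by the column sum $\sum_i A_{ij}$, which is not $1$ unless $\bm{A}$ is doubly stochastic. With scalar rows $1,-1,0$ and $\bm{A}$ having rows $\bm{e}_1,\bm{e}_2,\bm{e}_1$, the variance grows from $2/3$ to $8/9$. What survives is that each $\bm{u}^\top\bm{Y}_{i\cdot}$ lies between $\min_j\bm{u}^\top\bm{V}_{j\cdot}$ and $\max_j\bm{u}^\top\bm{V}_{j\cdot}$, which is just the convex-hull statement.
\item The condition $\|\bm{A}\bm{V}\|_2=\|\bm{A}\|_2\|\bm{V}\|_2$ is sufficient for \eqref{eq:bound}, not necessary, so ``holds exactly when'' overstates it.
\item Your perturbation estimate can be sharpened. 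The bound $\sum_{i\ge 2}s_i(\bm{Y})^2\le\|(\bm{A}-\mathbf{1}\bm{\pi}^\top)\bm{V}\|_F^2$ makes $\mathrm{srank}(\bm{Y})-1$ quadratic, not linear, in $\|\bm{A}-\mathbf{1}\bm{\pi}^\top\|_2$.
\end{itemize}
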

\noindent\textit{Proof sketch.} Sub-multiplicativity of the spectral and Frobenius
norms under the product $\bm{A}\bm{V}$, together with
$\mathrm{rank}(\bm{A}\bm{V})\le\min(\mathrm{rank}(\bm{A}),\mathrm{rank}(\bm{V}))$, yields
\eqref{eq:bound}; the convex-hull statement is immediate from $A_{ij}\ge 0$ and
$\sum_j A_{ij}=1$. When the softmax logits are sharp, probability mass concentrates on
a few columns, so $\bm{A}$ itself has low stable rank and \eqref{eq:bound} bites harder.
This is the graph-transformer restatement of the low-rank-bottleneck observation
of~\cite{qiu2025gated}. \hfill$\square$

\subsection{Element-Wise Gating Breaks the Bound}

\begin{proposition}[Gating escapes the convex hull]
\label{prop:break}
Let $\widetilde{\bm{Y}}=\bm{Y}\odot\bm{g}$ with $g_{ij}\in(0,1)$ varying across both
rows and columns. Then in general there is no row-stochastic $\bm{A}'$ with
$\widetilde{\bm{Y}}=\bm{A}'\bm{V}$, so the bound \eqref{eq:bound} does not apply to
$\widetilde{\bm{Y}}$, and $\mathrm{srank}(\widetilde{\bm{Y}})$ can exceed
$\mathrm{srank}(\bm{Y})$.
\end{proposition}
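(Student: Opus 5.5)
The statement is existential (``in general''), so I will prove it with an explicit counterexample and then extend it to an open set of configurations by a continuity argument. The strategy has two steps. First, show that some row of $\widetilde{\bm{Y}}$ leaves the convex hull of the rows of $\bm{V}$. By the convex-hull part of Proposition~\ref{prop:bound}, this rules out any row-stochastic $\bm{A}'$ with $\widetilde{\bm{Y}}=\bm{A}'\bm{V}$, so the bound \eqref{eq:bound} cannot be invoked for $\widetilde{\bm{Y}}$. Second, in the same example, compute $\mathrm{srank}(\widetilde{\bm{Y}})$ directly and check that it exceeds $\mathrm{srank}(\bm{Y})$.

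The construction I would use is the most degenerate attention pattern: uniform attention. Take $n=d_k=2$, $\bm{A}=\tfrac12\bm{1}\bm{1}^\top$ and $\bm{V}=\bm{I}_2$. Then $\bm{Y}=\bm{A}\bm{V}$ has both rows equal to $(\tfrac12,\tfrac12)$, so $\mathrm{rank}(\bm{Y})=1$ and $\mathrm{srank}(\bm{Y})=1$. This is the extreme case of the bottleneck. For the gate, take
$$\bm{g}=\begin{pmatrix}0.9&0.1\\0.1&0.9\end{pmatrix},$$
whose entries lie in $(0,1)$ and vary across both rows and columns. This gives
$$\widetilde{\bm{Y}}=0.05\begin{pmatrix}9&1\\1&9\end{pmatrix}.$$
The convex hull of the rows of $\bm{V}$ is the segment $\{(x,y):x+y=1,\ x,y\ge0\}$. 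The rows of $\widetilde{\bm{Y}}$ have coordinate sum $0.5\neq1$, so neither row lies in the hull, and no row-stochastic $\bm{A}'$ exists. Since $\widetilde{\bm{Y}}$ is symmetric positive definite, its singular values are its eigenvalues, $0.05\cdot10=0.5$ and $0.05\cdot8=0.4$. Hence
$$\mathrm{srank}(\widetilde{\bm{Y}})=\frac{0.25+0.16}{0.25}=1.64>1=\mathrm{srank}(\bm{Y}).$$
Both claims of the proposition therefore hold for this instance.

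To justify ``in general'' rather than a single instance, I would argue by openness. The map $(\bm{A},\bm{V},\bm{g})\mapsto\widetilde{\bm{Y}}$ is continuous. Lying strictly outside a closed convex hull is an open condition, and stable rank is continuous wherever $\|\cdot\|_2\neq0$. So both the hull violation and the strict inequality on $\widetilde{\bm{Y}}$ persist under small perturbations of $(\bm{A},\bm{V},\bm{g})$. This covers softmax-realizable $\bm{A}$, which are strictly positive (the example's uniform $\bm{A}$ is already realizable with equal logits), and larger $n,d_k$ by embedding the $2\times2$ block. The comparison with $\mathrm{srank}(\bm{Y})$ needs separate care, because the example sits at the discontinuity of rank: after perturbation $\mathrm{srank}(\bm{Y})$ only moves slightly above $1$ by continuity, so the gap to roughly $1.64$ survives.

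The main obstacle is conceptual rather than computational: being precise about what ``in general'' asserts. The identity gate, or any gate that is constant across rows and columns, does \emph{not} break the bound, so the claim cannot be universal. A gate constant along rows, $\bm{g}=\bm{1}\bm{c}^\top$, gives $\bm{Y}\,\mathrm{diag}(\bm{c})$, whose rank is still at most that of $\bm{Y}$. I would therefore state explicitly that the result is an existence statement, holding on a nonempty open set of configurations, which is exactly what the example plus the continuity argument delivers.
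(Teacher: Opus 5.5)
Your proposal is correct, and it takes a different route from the paper. The paper's sketch is structural. A row-stochastic mixing multiplies each value row by one scalar, while the Hadamard gate uses a separate multiplier per coordinate. Hence $\widetilde{\bm{Y}}$ ``uses per-column scalings that no convex combination can reproduce'' and ``can raise the number of distinct singular directions.'' That explains \emph{why} output gating differs from reweighting. But it never exhibits a configuration where either conclusion actually occurs, and read literally it proves too much. Suppose $n>d_k$, $\bm{A}$ is strictly positive, and the rows of $\bm{V}$ affinely span $\RR^{d_k}$. Then the rows of $\bm{Y}$ lie in the interior of the hull, and any gate with entries close enough to $1$ keeps them there. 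So a row-stochastic $\bm{A}'$ \emph{does} exist on a set of configurations with nonempty interior.

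Your explicit witness certifies both conclusions: row sums $\tfrac12$ put both rows off $\mathrm{conv}\{e_1,e_2\}$, and singular values $0.5,0.4$ give $\mathrm{srank}(\widetilde{\bm{Y}})=1.64$. It even shows more. Since $\mathrm{srank}(\bm{A})=1$, the gated output violates \eqref{eq:bound} for the original $\bm{A},\bm{V}$, rather than merely falling outside its hypotheses. Your openness argument completes this; state explicitly that the hull moves continuously, in Hausdorff distance, with the rows of $\bm{V}$. The result is the right reading of ``in general'': a nonempty open set of configurations, not a universal property, and for $n>d_k$ not a generic one either.

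You also do well to use only the convex-hull half of Proposition~\ref{prop:bound}. The stable-rank inequality itself fails for general row-stochastic $\bm{A}$. Take rows $(1,0,0),(1,0,0),(0,1,0)$ for $\bm{A}$ and $(1,0),(0,\sqrt{2}),(0,0)$ for $\bm{V}$. Then $\mathrm{srank}(\bm{A}\bm{V})=2>\tfrac32=\mathrm{srank}(\bm{A})=\mathrm{srank}(\bm{V})$.

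Two details need tightening.

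First, ``embedding the $2\times2$ block'' for larger $n,d_k$ breaks the hull claim if you pad $\bm{V}$ with zero rows. The origin then enters the hull, and your gated rows, being nonnegative with coordinate sum $\tfrac12$, lie inside $\mathrm{conv}\{0,e_1,e_2\}$. Pad instead so the hull stays on the segment, with extra rows equal to $e_1$ and extra columns zero. Give every row of $\bm{A}$ weight $\tfrac12$ on the first two nodes, so that $\bm{Y}$ stays rank one. Then $\mathrm{srank}(\widetilde{\bm{Y}})$ changes but stays above $1$, because $\widetilde{\bm{Y}}$ still has two independent rows.

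Second, your remark about gates of the form $\mathbf{1}\bm{c}^\top$ is right for rank, but it does not carry over to stable rank or to the hull. With $\bm{A}=\bm{I}$, $\bm{V}=\mathrm{diag}(10,1)$ and every gate row equal to $(0.1,0.99)$, the stable rank rises from $1.01$ to about $1.98$. With $\bm{V}=\bm{I}_2$, even a constant gate $c<1$ moves every row off the hull, since each row then has coordinate sum $c$. Variation in both directions is what your rank-one witness needs, because $\mathrm{rank}(\bm{u}\bm{v}^\top\odot\bm{g})=\mathrm{rank}(\bm{g})$ when $\bm{u},\bm{v}$ have nonzero entries. It is not what the phenomenon needs in general.
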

\noindent\textit{Proof sketch.} A row-stochastic reweighting applies the \emph{same}
scalar to every coordinate of a value row, whereas the Hadamard gate applies a
\emph{different} multiplier per coordinate. Hence $\widetilde{\bm{Y}}$ uses per-column
scalings that no convex combination of the rows of $\bm{V}$ can reproduce, placing
$\widetilde{\bm{Y}}$ outside the convex hull and removing the structural reason for the
contraction. The Hadamard product can raise the number of distinct singular directions,
so the stable rank can strictly increase. \hfill$\square$

\subsection{Synthetic Validation}

We verify the prediction of Proposition~\ref{prop:break} numerically. Setup: $n=64$ nodes, $d=256$, $8$ heads
($d_k=32$), unit-variance hidden states, random $Q/K/V$ projections, softmax attention
with a sparse mask ($20\%$ of edges masked, consistent with moderately sparse graphs),
and gate projections calibrated so that the marginal gate statistics match the
trained-model statistics of Section~\ref{sec:mechanism} (target mean $0.58$, std
$0.19$). Each row of Table~\ref{tab:rank_syn} reports the stable rank~\eqref{eq:srank} averaged over $8$
heads, repeated with $5$ seeds. The attained calibration (mean $0.604$, std $0.193$)
sits close to the trained targets, and gating increases the stable rank in every
seed, with the per-head mean rising by $7.17\%$.

\begin{table}[t]
\centering
\caption{Synthetic stable-rank experiment ($n=64$, $d_k=32$, $8$ heads, $5$ seeds).
Gate projections calibrated so marginal statistics match the trained-model gate
statistics of Section~\ref{sec:mechanism}; the attained mean and std across all heads
are $0.604$ and $0.193$, close to the trained targets ($0.58$, $0.19$).}
\label{tab:rank_syn}
\begin{tabular}{c c c r}
\toprule
\textbf{seed} & $\mathrm{srank}(\bm{Y})$ & $\mathrm{srank}(\bm{Y}\!\odot\!\bm{g})$ & $\Delta$ \\
\midrule
0 & $3.172$ & $3.384$ & $+6.68\%$\\
1 & $2.766$ & $2.973$ & $+7.48\%$\\
2 & $3.267$ & $3.531$ & $+8.08\%$\\
3 & $3.731$ & $3.979$ & $+6.65\%$\\
4 & $2.967$ & $3.176$ & $+7.04\%$\\
\midrule
\textbf{mean} $\pm$ std & $3.181\!\pm\!0.325$ & $3.409\!\pm\!0.342$ & $\mathbf{+7.17\%}$\\
\bottomrule
\end{tabular}
\end{table}

\subsection{Robustness of the Rank Gain}

The gain is stable under perturbation of the two free hyperparameters
(Table~\ref{tab:rank_sens}). Sweeping the attention-concentration scale
$c\in\{0.5,1.0,1.5,2.0,3.0\}$ gives gains from $5.0\%$ to $8.3\%$; sweeping the
graph-sparsity fraction $\rho\in\{0.05,0.20,0.40,0.60\}$ gives gains from $6.2\%$ to
$7.4\%$. In all nine configurations,
$\mathrm{srank}(\bm{Y}\!\odot\!\bm{g})>\mathrm{srank}(\bm{Y})$.

\begin{table}[t]
\centering
\caption{Robustness of the rank gain across attention-concentration and
graph-sparsity sweeps. In every configuration, gating strictly increases the stable
rank of per-head outputs.}
\label{tab:rank_sens}
\footnotesize
\begin{tabular}{l c c c c c}
\toprule
\multicolumn{6}{l}{\textit{Attention concentration }$c$} \\
$c$ & $0.5$ & $1.0$ & $1.5$ & $2.0$ & $3.0$ \\
$\Delta\mathrm{srank}$ & $+8.3\%$ & $+7.2\%$ & $+5.0\%$ & $+5.4\%$ & $+5.1\%$ \\
\midrule
\multicolumn{6}{l}{\textit{Graph sparsity }$\rho$} \\
$\rho$ & $0.05$ & $0.20$ & $0.40$ & $0.60$ & --- \\
$\Delta\mathrm{srank}$ & $+7.4\%$ & $+7.2\%$ & $+6.6\%$ & $+6.2\%$ & --- \\
\bottomrule
\end{tabular}
\end{table}

These results support the central claim of Section~\ref{sec:constraint}: element-wise
sigmoid gating raises the per-head effective rank relative to ungated softmax
attention. Because gradient descent pushes \emph{trained} gates to more extreme values
than random calibration does, with $12.3\%$ of trained activations below $0.1$ versus
roughly $0.2\%$ under calibration (Section~\ref{sec:mechanism}), we expect the trained
rank gain to be at least as large as the $7\%$ measured here. Estimating the rank of
full-scale trained checkpoints directly is beyond the scope of this synthetic analysis
and is left for future work.

\section{Experimental Setup}
\label{sec:setup}

\subsection{Datasets and Evaluation Protocol}

We evaluate on five standard graph-level benchmarks spanning small-molecule regression,
large-scale classification, and long-range reasoning.
\textbf{ZINC}~\cite{dwivedi2023benchmarking}: $12$K molecular graphs for constrained
solubility regression (MAE $\downarrow$), under the customary $500$K parameter budget.
\textbf{ogbg-molhiv}~\cite{hu2020open}: $41$K molecules labeled for HIV inhibition
(ROC-AUC $\uparrow$), scaffold split.
\textbf{ogbg-molpcba}~\cite{hu2020open}: $438$K molecules across $128$ bioassays
(Average Precision $\uparrow$).
\textbf{Peptides-func}~\cite{dwivedi2022long}: $15.5$K peptide graphs for multi-label
functional annotation (AP $\uparrow$).
\textbf{Peptides-struct}~\cite{dwivedi2022long}: structural-property regression on the
same peptides (MAE $\downarrow$). The two Peptides tasks form part of the Long-Range
Graph Benchmark and stress a model's ability to propagate information across large
graph diameters. We run $5$ seeds (0--4) per configuration, report mean $\pm$ standard
deviation, and assess significance with paired $t$-tests against the GraphGPS baseline.

\subsection{Implementation Details}

We implement \siggate{} in PyTorch Geometric (v2.4) on top of the official GraphGPS
codebase. The local branch uses GatedGCN~\cite{bresson2017residual}, and node inputs are
augmented with LapPE (dimension $16$, with SignNet~\cite{lim2023signnet}) and RWSE
($16$ steps)~\cite{rampavsek2022recipe, dwivedi2022graph}. We optimize with
AdamW~\cite{loshchilov2019decoupled} under cosine annealing, initialize every gate bias
to $0.5$, and use GELU in the feed-forward block with zero dropout. Per-dataset settings
are summarized in Table~\ref{tab:hypers}. Code and configurations are available at
\url{https://github.com/bettyguo/SigGate-GT}.

\subsection{Full Hyperparameter Grid}

Table~\ref{tab:hypers} lists the exact configuration behind every reported number. Each
configuration starts from the official GraphGPS recipe for the dataset and searches only
over learning rate $\in\{5\times10^{-4},10^{-3},2\times10^{-3}\}$ and weight decay
$\in\{10^{-5},10^{-4}\}$ on the ZINC validation set, then transfers the choice directly,
so the gate is never tuned per dataset beyond this small shared search.

\begin{table*}[t]
\centering
\caption{Full hyperparameters. $L$: layers; $d$: hidden dimension; $K$: heads;
$B$: batch size; PE: positional/structural encodings (LapPE dimension $16$ with SignNet;
RWSE $16$ steps). All runs use AdamW, cosine annealing, gate-bias init $0.5$, dropout
$0.0$, and GELU in the FFN.}
\label{tab:hypers}
\footnotesize
\begin{tabular}{l c c c c c c c c}
\toprule
\textbf{Dataset} & $L$ & $d$ & $K$ & $B$ & \textbf{lr} & \textbf{wd} & \textbf{Epochs} & \textbf{PE} \\
\midrule
ZINC        & 10 & 64  & 8 & 32  & $10^{-3}$        & $10^{-5}$ & 2000 & RWSE+LapPE \\
molhiv      & 5  & 256 & 8 & 256 & $10^{-4}$        & $10^{-5}$ & 100  & RWSE+LapPE \\
molpcba     & 5  & 256 & 8 & 512 & $10^{-4}$        & $10^{-5}$ & 100  & RWSE+LapPE \\
Pep-func    & 10 & 128 & 8 & 64  & $5\!\times\!10^{-4}$ & $10^{-4}$ & 200  & RWSE+LapPE \\
Pep-struct  & 10 & 128 & 8 & 64  & $5\!\times\!10^{-4}$ & $10^{-4}$ & 200  & RWSE+LapPE \\
\bottomrule
\end{tabular}
\end{table*}

\subsection{Software, Hardware, and Reproducibility}

All experiments run on a single NVIDIA A100 80GB GPU (PyTorch 2.1, PyTorch Geometric 2.4,
Hydra 1.3), with an AMD EPYC 7413 (24-core) host. We set explicit seeds for PyTorch,
NumPy, and Python's \texttt{random}, and enable
\texttt{torch.use\_deterministic\_algorithms(True)} where the underlying CUDA kernels
support it. Table~\ref{tab:compute} reports mean training time (5 seeds) and peak GPU
memory. The SigGate overhead stays under $3\%$ in every case and is independent of
dataset scale, matching the per-layer FLOPs analysis of Section~\ref{sec:method}: each
gate is one $d\times d_k$ matrix multiply plus a pointwise sigmoid and Hadamard product
per head, negligible against the $O(n^2 d)$ attention cost.

\begin{table}[t]
\centering
\caption{Wall-clock training time per run (mean over $5$ seeds, single A100-80GB) and
peak GPU memory. Overhead is consistently under $3\%$ and independent of dataset scale.}
\label{tab:compute}
\resizebox{\columnwidth}{!}{%
\begin{tabular}{l r r r r}
\toprule
\textbf{Dataset} & \textbf{GraphGPS (h)} & \textbf{\siggate{} (h)} & \textbf{Overhead} & \textbf{Peak (GB)} \\
\midrule
ZINC        &  $6.08$  &  $6.23$  & $+2.5\%$ &  $3.8$ \\
molhiv      &  $2.46$  &  $2.52$  & $+2.4\%$ &  $9.4$ \\
molpcba     & $21.29$  & $21.82$  & $+2.5\%$ & $17.6$ \\
Pep-func    &  $1.63$  &  $1.67$  & $+2.5\%$ & $18.2$ \\
Pep-struct  &  $1.63$  &  $1.67$  & $+2.5\%$ & $18.2$ \\
\bottomrule
\end{tabular}}
\end{table}

\section{Main Results}
\label{sec:results}

\subsection{ZINC}

Table~\ref{tab:zinc} reports ZINC under the $500$K parameter budget. \siggate{} reaches
$0.059\pm0.002$ MAE, matching the prior best GRIT~\cite{ma2023grit} and improving on the
GraphGPS baseline ($0.070\pm0.004$) by $15.7\%$ relative within the same budget. The gap
over GraphGPS is significant at $p<0.001$ (paired $t$-test, $5$ seeds). We do not claim
to surpass GRIT here; the two are statistically indistinguishable, and we state this
explicitly rather than rounding the comparison in our favour.

\begin{table}[t]
\centering
\caption{Test MAE on ZINC ($12$K subset, $500$K parameter budget). \siggate{} vs.\
GraphGPS: $p<0.001$ (paired $t$-test).}
\label{tab:zinc}
\begin{tabular}{l c c}
\toprule
\textbf{Method} & \textbf{Type} & \textbf{Test MAE $\downarrow$} \\
\midrule
GCN~\cite{kipf2017semi} & MPNN & $0.367 \pm 0.011$ \\
GatedGCN~\cite{bresson2017residual} & MPNN & $0.282 \pm 0.015$ \\
PNA~\cite{corso2020principal} & MPNN & $0.188 \pm 0.004$ \\
\midrule
SAN~\cite{kreuzer2021rethinking} & GT & $0.139 \pm 0.006$ \\
Graphormer~\cite{ying2021graphormer} & GT & $0.122 \pm 0.006$ \\
GraphGPS~\cite{rampavsek2022recipe} & GT & $0.070 \pm 0.004$ \\
Exphormer~\cite{shirzad2023exphormer} & GT & $0.066 \pm 0.003$ \\
GRIT~\cite{ma2023grit} & GT & $0.059 \pm 0.002$ \\
\midrule
\siggate{} (ours) & GT & $\mathbf{0.059 \pm 0.002}$ \\
\bottomrule
\end{tabular}
\end{table}

\subsection{OGB Molecular Benchmarks}

Table~\ref{tab:ogb} reports the OGB benchmarks. On ogbg-molhiv, \siggate{} reaches
$82.47\pm0.63\%$ ROC-AUC, $3.67$ points above GraphGPS ($p=0.002$) and the strongest
result among the graph-transformer baselines we evaluate. We frame this as a best-in-set
result rather than an unqualified state-of-the-art claim, because the public molhiv
leaderboard includes entries above this figure obtained with substantially different
pipelines; within the controlled GraphGPS-family comparison reported here, \siggate{}
leads. On ogbg-molpcba, \siggate{} reaches $29.84\pm0.31\%$ AP; the gain over GraphGPS
($29.07\pm0.28$) is significant ($p=0.008$), while the margin over Exphormer
($29.20\pm0.30$) is real but modest ($p=0.031$).

\begin{table}[t]
\centering
\caption{OGB molecular benchmarks. \siggate{} vs.\ GraphGPS: molhiv $p=0.002$,
molpcba $p=0.008$ (paired $t$-test). GRIT~\cite{ma2023grit} is omitted because it was
not evaluated on \texttt{ogbg-molhiv} or \texttt{ogbg-molpcba} in its original paper; we
report only baselines with published numbers on these datasets and flag the asymmetry
rather than running a non-standard port of GRIT.}
\label{tab:ogb}
\resizebox{\columnwidth}{!}{%
\begin{tabular}{l c c}
\toprule
\textbf{Method} & \textbf{molhiv (AUC $\uparrow$)} & \textbf{molpcba (AP $\uparrow$)} \\
\midrule
GCN~\cite{kipf2017semi} & $76.06 \pm 0.97$ & $24.24 \pm 0.34$ \\
PNA~\cite{corso2020principal} & $79.05 \pm 1.32$ & $28.38 \pm 0.35$ \\
DeeperGCN~\cite{li2020deepergcn} & $78.58 \pm 1.00$ & $27.81 \pm 0.38$ \\
\midrule
Graphormer~\cite{ying2021graphormer} & $80.51 \pm 0.53$ & --- \\
GraphGPS~\cite{rampavsek2022recipe} & $78.80 \pm 1.01$ & $29.07 \pm 0.28$ \\
Exphormer~\cite{shirzad2023exphormer} & $80.75 \pm 0.94$ & $29.20 \pm 0.30$ \\
\midrule
\siggate{} (ours) & $\mathbf{82.47 \pm 0.63}$ & $\mathbf{29.84 \pm 0.31}$ \\
\bottomrule
\end{tabular}}
\end{table}

\subsection{Long-Range Graph Benchmark}

Table~\ref{tab:lrgb} reports the Peptides tasks. \siggate{} reaches $0.6947\pm0.0037$ AP
on Peptides-func, competitive with but below GRIT ($0.6988\pm0.0082$); we do not claim
the lead on this task. On Peptides-struct, \siggate{} reaches $0.2431\pm0.0012$ MAE, the
best entry in the table. Both improvements over GraphGPS are significant ($p<0.001$).
Following T\"onshoff et al.~\cite{tonshoff2024where}, we include a tuned GCN baseline to
keep the graph-transformer advantage in perspective, since careful tuning narrows the gap
considerably on these benchmarks.

\begin{table}[t]
\centering
\caption{LRGB Peptides benchmarks. \siggate{} vs.\ GraphGPS: both $p<0.001$. Bold marks
the best entry overall; GRIT leads on Peptides-func.}
\label{tab:lrgb}
\resizebox{\columnwidth}{!}{%
\begin{tabular}{l c c}
\toprule
\textbf{Method} & \textbf{Pep-func (AP $\uparrow$)} & \textbf{Pep-struct (MAE $\downarrow$)} \\
\midrule
GCN (tuned)~\cite{tonshoff2024where} & $0.6860 \pm 0.0050$ & $0.2460 \pm 0.0007$ \\
GatedGCN~\cite{bresson2017residual} & $0.6765 \pm 0.0047$ & $0.2477 \pm 0.0009$ \\
\midrule
GraphGPS~\cite{rampavsek2022recipe} & $0.6535 \pm 0.0041$ & $0.2500 \pm 0.0012$ \\
Exphormer~\cite{shirzad2023exphormer} & $0.6527 \pm 0.0043$ & $0.2481 \pm 0.0007$ \\
GRIT~\cite{ma2023grit} & $\mathbf{0.6988 \pm 0.0082}$ & $0.2460 \pm 0.0012$ \\
\midrule
\siggate{} (ours) & $0.6947 \pm 0.0037$ & $\mathbf{0.2431 \pm 0.0012}$ \\
\bottomrule
\end{tabular}}
\end{table}

\subsection{Significance Summary}

Table~\ref{tab:significance} consolidates the significance tests. Improvements over
GraphGPS are significant on all five datasets. Against the next-best baseline, results
are significant on ZINC (where \siggate{} ties GRIT and the difference is not
significant), molhiv, molpcba, and Peptides-struct, but not on Peptides-func, where GRIT
is superior. Reporting both columns keeps the comparison honest: a gain over one's own
backbone is necessary but not sufficient to claim leadership of a benchmark.

\begin{table}[t]
\centering
\caption{Statistical significance (paired $t$-test, $5$ seeds).
\pval{**}$p<0.01$, \pval{*}$p<0.05$, \pval{n.s.}\ not significant.}
\label{tab:significance}
\resizebox{\columnwidth}{!}{%
\begin{tabular}{l c c c}
\toprule
\textbf{Benchmark} & \textbf{vs.\ GraphGPS} & \textbf{vs.\ Next-Best} & \textbf{Next-Best} \\
\midrule
ZINC (MAE) & $p<0.001$\pval{**} & $p=0.48$\pval{n.s.} & GRIT \\
molhiv (AUC) & $p=0.002$\pval{**} & $p=0.018$\pval{*} & Exphormer \\
molpcba (AP) & $p=0.008$\pval{**} & $p=0.031$\pval{*} & Exphormer \\
Pep-func (AP) & $p<0.001$\pval{**} & $p=0.34$\pval{n.s.} & GRIT \\
Pep-struct (MAE) & $p<0.001$\pval{**} & $p=0.011$\pval{*} & GRIT \\
\bottomrule
\end{tabular}}
\end{table}

\subsection{Parameter-Matched Control}
\label{sec:paramcontrol}

To separate gating from raw capacity, we compare against a GraphGPS baseline whose
feed-forward width is enlarged so the extra parameters equal the per-dataset SigGate
budget ($+42$K on ZINC, $+329$K on molhiv). The wider baseline reaches $0.068\pm0.003$
MAE on ZINC versus \siggate{}'s $0.059\pm0.002$ ($p=0.003$), and $79.42\pm0.88$ on molhiv
versus $82.47\pm0.63$ ($p=0.001$). Equivalent capacity, spent on a wider FFN, recovers
only a small fraction of the gain, which is the prediction of Section~\ref{sec:constraint}:
the bottleneck is a property of the operator's \emph{form}, so enlarging an unrelated
component does little, while a gate that changes the form does a great deal.

\subsection{Comparison with Over-Smoothing Remedies}

Table~\ref{tab:oversmoothing_compare} places \siggate{} alongside DropEdge and PairNorm,
both integrated into the same GraphGPS backbone. Gating yields a substantially larger
improvement than either symptom-level remedy and is compatible with them: adding DropEdge
on top of \siggate{} gives only a marginal further gain, which suggests that gating
already addresses the primary smoothing mechanism in this backbone rather than acting
through an orthogonal channel.

\begin{table}[t]
\centering
\caption{Comparison with existing over-smoothing remedies integrated into GraphGPS.}
\label{tab:oversmoothing_compare}
\resizebox{\columnwidth}{!}{%
\begin{tabular}{l c c}
\toprule
\textbf{Method (GraphGPS + \ldots)} & \textbf{ZINC MAE $\downarrow$} & \textbf{Pep-struct MAE $\downarrow$} \\
\midrule
Baseline (no modification) & $0.070 \pm 0.004$ & $0.2500 \pm 0.0012$ \\
+ DropEdge~\cite{rong2020dropedge} & $0.067 \pm 0.003$ & $0.2488 \pm 0.0015$ \\
+ PairNorm~\cite{zhao2020pairnorm} & $0.068 \pm 0.004$ & $0.2491 \pm 0.0011$ \\
+ SigGate (ours) & $\mathbf{0.059 \pm 0.002}$ & $\mathbf{0.2431 \pm 0.0012}$ \\
+ SigGate + DropEdge & $0.058 \pm 0.002$ & $0.2428 \pm 0.0010$ \\
\bottomrule
\end{tabular}}
\end{table}

\section{Mechanism Analysis}
\label{sec:mechanism}

The results above establish that gating helps; this section verifies \emph{why}, tracing
each empirical signature back to the diagnosis of Section~\ref{sec:constraint}.

\subsection{Gate Placement and Sharing}

Table~\ref{tab:ablation_combined} ablates placement and sharing. For placement we follow
the taxonomy of Qiu et al.~\cite{qiu2025gated}: G1 gates the output (our default), G2
gates the values, G3 gates the pre-softmax logits, and ``None'' is the baseline. G1 is
consistently best, confirming that the language-model optimum transfers to graphs and
matching the argument in Section~\ref{sec:method}: G3 reshapes a still mass-conserving
distribution and even hurts, while G2 is partly reabsorbed into the convex combination.
For sharing, a per-head gate beats a single shared gate, consistent with the functional
specialization we observe directly below.

\begin{table}[t]
\centering
\caption{Ablation on gate placement and sharing. Output gating (G1) with a per-head gate
is consistently best.}
\label{tab:ablation_combined}
\resizebox{\columnwidth}{!}{%
\begin{tabular}{l l c c}
\toprule
\textbf{Config} & \textbf{Description} & \textbf{ZINC $\downarrow$} & \textbf{molhiv $\uparrow$} \\
\midrule
\multicolumn{4}{l}{\textit{Gate placement}} \\
None & GraphGPS baseline & $0.070$ & $78.80$ \\
G3 & $\mathrm{softmax}(\sigma(\cdot)\odot\frac{QK^\top}{\sqrt{d_k}})V$ & $0.074$ & $77.95$ \\
G2 & $\mathrm{softmax}(\frac{QK^\top}{\sqrt{d_k}})(\sigma(\cdot)\odot V)$ & $0.066$ & $80.12$ \\
\textbf{G1} & $\sigma(\cdot)\odot[\mathrm{softmax}(\frac{QK^\top}{\sqrt{d_k}})V]$ & $\mathbf{0.059}$ & $\mathbf{82.47}$ \\
\midrule
\multicolumn{4}{l}{\textit{Sharing strategy (G1 position)}} \\
Shared & Single gate, all heads & $0.064$ & $80.58$ \\
\textbf{Per-head} & Independent gate per head & $\mathbf{0.059}$ & $\mathbf{82.47}$ \\
\bottomrule
\end{tabular}}
\end{table}

\subsection{Activation Function}

The choice of sigmoid is itself a design decision. Table~\ref{tab:activation} sweeps the
activation $\phi$ in $\bm{Y}\odot\phi(\bm{H}\bm{W}_k^g+\bm{b}_k^g)$ on ZINC with all else
fixed. Sigmoid wins. The $\tanh$ gate is worse because, being signed, it can flip the
sign of attended content and destabilize training; ReLU is worse because, being
unbounded, it forfeits the bounded volume-control interpretation that
Section~\ref{sec:method} relies on; sigmoid-squared is sparser but slightly worse than
plain sigmoid. The ranking, bounded-and-saturating-and-non-negative beats signed and
beats unbounded, matches the mechanistic prediction exactly.

\begin{table}[t]
\centering
\caption{Gate-activation function ablation on ZINC. All entries use G1 placement and a
per-head gate. Sigmoid outperforms the alternatives, and the ranking matches the
mechanistic argument of Section~\ref{sec:method}.}
\label{tab:activation}
\begin{tabular}{l c l}
\toprule
\textbf{Activation $\phi$} & \textbf{ZINC MAE $\downarrow$} & \textbf{Property} \\
\midrule
Identity (no gate)       & $0.070 \pm 0.004$ & --- \\
$\tanh$                  & $0.064 \pm 0.003$ & signed \\
ReLU                     & $0.066 \pm 0.003$ & unbounded \\
$\sigma(\cdot)^2$        & $0.061 \pm 0.002$ & sparser, bounded \\
\textbf{Sigmoid} (ours)  & $\mathbf{0.059 \pm 0.002}$ & \textbf{bounded, smooth} \\
\bottomrule
\end{tabular}
\end{table}

\subsection{Over-Smoothing and Representation Diversity}

We measure the Mean Average Distance (MAD)~\cite{chen2020measuring} of node
representations as a function of depth on ZINC (Table~\ref{tab:oversmoothing}); higher
MAD means more diversity. \siggate{} retains markedly higher MAD at every depth,
preserving $73\%$ of its initial MAD at $16$ layers versus $51\%$ for GraphGPS, a
$30\%$ mean relative gain across $4$--$16$ layers. The accompanying MAE rows show the
practical payoff: the gated model degrades far more gracefully as depth grows. This is
the direct empirical counterpart of Manifestation I, the gate lets heads bypass the
averaging step on the dimensions where it would erase information.

\begin{table}[t]
\centering
\caption{Over-smoothing analysis: MAD and ZINC test MAE across depths. \siggate{}
maintains higher representation diversity and degrades more slowly at every depth.}
\label{tab:oversmoothing}
\footnotesize
\begin{tabular}{l c c c c c}
\toprule
\textbf{Method} & \textbf{4} & \textbf{8} & \textbf{10} & \textbf{12} & \textbf{16 layers} \\
\midrule
GraphGPS (MAD $\uparrow$) & 0.72 & 0.58 & 0.49 & 0.44 & 0.37 \\
\siggate{} (MAD $\uparrow$) & \textbf{0.78} & \textbf{0.69} & \textbf{0.64} & \textbf{0.61} & \textbf{0.57} \\
\midrule
GraphGPS (MAE $\downarrow$) & 0.092 & 0.075 & 0.070 & 0.078 & 0.095 \\
\siggate{} (MAE $\downarrow$) & \textbf{0.079} & \textbf{0.063} & \textbf{0.059} & \textbf{0.061} & \textbf{0.068} \\
\bottomrule
\end{tabular}
\end{table}

\subsection{Attention Entropy}

Table~\ref{tab:entropy} reports average per-row attention entropy
$\bar{H}=-\frac{1}{n}\sum_i\sum_j A_{ij}\log A_{ij}$ across layers on ZINC. \siggate{}
holds higher entropy throughout, and the gap widens with depth, so the gated model resists
the entropy collapse documented by Zhai et al.~\cite{zhai2023stabilizing}. Individual heads
in \siggate{} also show greater entropy variance, the spectral signature of the functional
specialization that the per-head ablation predicted.

\begin{table}[t]
\centering
\caption{Attention entropy across layers on ZINC. \siggate{} keeps entropy from collapsing.}
\label{tab:entropy}
\begin{tabular}{l c c c}
\toprule
\textbf{Method} & \textbf{Layer 1} & \textbf{Layer 5} & \textbf{Layer 10} \\
\midrule
GraphGPS & $2.84 \pm 0.12$ & $2.31 \pm 0.28$ & $1.72 \pm 0.41$ \\
\siggate{} & $\mathbf{3.02 \pm 0.15}$ & $\mathbf{2.89 \pm 0.34}$ & $\mathbf{2.61 \pm 0.52}$ \\
\bottomrule
\end{tabular}
\end{table}

\subsection{Training Stability}

We probe optimization brittleness directly by sweeping the learning rate on ZINC
(Table~\ref{tab:stability}, Figure~\ref{fig:stability}). GraphGPS is sharply sensitive,
its MAE rising from $0.070$ at $10^{-3}$ to $0.112$ at $3\times10^{-3}$, whereas \siggate{}
stays at or below $0.065$ across the whole $10\times$ range. Quantifying sensitivity as the
performance range (max MAE minus min MAE across rates) gives $0.042$ for GraphGPS against
$0.006$ for \siggate{}, a sevenfold reduction. This is Manifestation III in action: the
bounded multiplicative gate dampens the gradient spikes that an ungated value pathway
transmits at high learning rates.

\begin{table}[t]
\centering
\caption{Training stability: ZINC test MAE at different learning rates. \siggate{} has a
$7\times$ smaller performance range, indicating much lower learning-rate sensitivity.}
\label{tab:stability}
\footnotesize
\begin{tabular}{l c c c c c c}
\toprule
\textbf{Method} & \textbf{5e-4} & \textbf{1e-3} & \textbf{2e-3} & \textbf{3e-3} & \textbf{5e-3} & \textbf{Range} \\
\midrule
GraphGPS & 0.078 & \textbf{0.070} & 0.085 & 0.112 & 0.098 & 0.042 \\
\siggate{} & 0.065 & \textbf{0.059} & 0.062 & 0.064 & 0.063 & \textbf{0.006} \\
\bottomrule
\end{tabular}
\end{table}

\begin{figure}[t]
	\centering
	\definecolor{gpsRed}{RGB}{201,42,42}
	\definecolor{sgBlue}{RGB}{31,86,156}
	\begin{tikzpicture}
		\begin{axis}[
			width=1.0\columnwidth, height=5.2cm,
			xmode=log, log basis x=10,
			xmin=0.00043, xmax=0.0086,
			ymin=0.046, ymax=0.122,
			xtick={0.0005,0.001,0.002,0.003,0.005},
			xticklabels={$5{\cdot}10^{-4}$,$10^{-3}$,$2{\cdot}10^{-3}$,$3{\cdot}10^{-3}$,$5{\cdot}10^{-3}$},
			ytick={0.05,0.06,0.07,0.08,0.09,0.10,0.11,0.12},
			yticklabel style={/pgf/number format/.cd, fixed, precision=2, zerofill},
			scaled y ticks=false,
			xlabel={Learning rate}, ylabel={ZINC test MAE},
			xlabel style={font=\small}, ylabel style={font=\small},
			tick label style={font=\scriptsize},
			axis lines=left,
			x axis line style={draw=black!55}, y axis line style={draw=black!55},
			ymajorgrids, major grid style={line width=0.3pt, draw=black!12},
			clip=false,
			every axis plot/.append style={line width=1pt},
			legend style={at={(0.025,0.975)}, anchor=north west, font=\scriptsize,
				draw=none, fill=none, inner sep=3pt, row sep=1pt},
			legend cell align=left,
			]
			\addplot[draw=none, fill=sgBlue!11, forget plot] coordinates
			{(0.00043,0.059)(0.0058,0.059)(0.0058,0.065)(0.00043,0.065)} \closedcycle;
			\addplot[gpsRed, dashed, mark=*, mark size=2.2pt,
			mark options={solid, fill=gpsRed, draw=gpsRed}]
			coordinates {(0.0005,0.078)(0.001,0.070)(0.002,0.085)(0.003,0.112)(0.005,0.098)};
			\addlegendentry{GraphGPS}
			\addplot[sgBlue, mark=square*, mark size=2.2pt,
			mark options={fill=sgBlue, draw=sgBlue}]
			coordinates {(0.0005,0.065)(0.001,0.059)(0.002,0.062)(0.003,0.064)(0.005,0.063)};
			\addlegendentry{\siggate{}}
			\draw[gpsRed, line width=0.9pt, {Latex[length=1.4mm]}-{Latex[length=1.4mm]}]
			(axis cs:0.0064,0.070) -- (axis cs:0.0064,0.112);
			\node[text=gpsRed, font=\scriptsize, anchor=west] at (axis cs:0.0067,0.091) {$0.042$};
			\draw[sgBlue, line width=0.9pt, {Latex[length=1.4mm]}-{Latex[length=1.4mm]}]
			(axis cs:0.0064,0.059) -- (axis cs:0.0064,0.065);
			\node[text=sgBlue, font=\scriptsize, anchor=west] at (axis cs:0.0067,0.062) {$0.006$};
			\node[font=\scriptsize\bfseries, text=black!70, anchor=west, align=left]
			at (axis cs:0.0060,0.0775) {range};
		\end{axis}
	\end{tikzpicture}
	\caption{Learning-rate sensitivity on ZINC across a $10\times$ range. \siggate{}
		(blue) stays inside a narrow band, with a performance range (max\,$-$\,min test
		MAE) of $0.006$, whereas GraphGPS (red) climbs sharply once the rate exceeds
		$10^{-3}$, spanning a $7\times$ wider range of $0.042$.}
	\label{fig:stability}
\end{figure}

\subsection{Gate Activation Statistics}

Finally we inspect the learned gates directly. Across all heads on ZINC, the mean gate
activation is $0.58\pm0.19$, with $12.3\%$ of activations below $0.1$ and $8.7\%$ above
$0.9$: gates use their full dynamic range rather than saturating at a single value. Heads
differ sharply, some averaging near $0.3$ (aggressive suppression) and others near $0.8$
(near pass-through), which is the functional specialization the per-head design was meant
to enable. Table~\ref{tab:layerwise_gate} breaks the statistics down by layer. Early layers
sit closer to pass-through; the middle layers ($4$--$7$) show the widest spread and the
highest fraction of near-zero gates, indicating the most aggressive filtering at the
representation-building stage; the final layers settle to intermediate behaviour. The
network thus learns \emph{where} along the depth hierarchy to invest its filtering capacity,
rather than gating uniformly.

\begin{table}[t]
\centering
\caption{Per-layer gate-activation statistics on ZINC ($10$ layers, $8$ heads, averaged
over $5$ seeds and the test set). The \textbf{avg} row reports pooled statistics over the
(layer $\times$ head $\times$ example) distribution, matching the aggregate of
$0.58$/$12.3\%$/$8.7\%$ cited above; equal-weight averages of the per-layer rows are $0.59$,
$11.2\%$, and $8.3\%$.}
\label{tab:layerwise_gate}
\footnotesize
\begin{tabular}{c c c c c}
\toprule
\textbf{Layer} & \textbf{Mean} & \textbf{Std} & \textbf{\% $<\!0.1$} & \textbf{\% $>\!0.9$} \\
\midrule
1   & $0.71$ & $0.13$ & $ 2.4\%$ & $ 4.1\%$ \\
2   & $0.66$ & $0.16$ & $ 5.3\%$ & $ 5.8\%$ \\
3   & $0.62$ & $0.18$ & $ 8.1\%$ & $ 7.2\%$ \\
4   & $0.54$ & $0.22$ & $15.0\%$ & $ 9.4\%$ \\
5   & $0.51$ & $0.23$ & $17.8\%$ & $10.1\%$ \\
6   & $0.53$ & $0.22$ & $16.1\%$ & $ 9.8\%$ \\
7   & $0.55$ & $0.21$ & $14.2\%$ & $ 9.6\%$ \\
8   & $0.58$ & $0.19$ & $12.7\%$ & $ 9.1\%$ \\
9   & $0.60$ & $0.18$ & $10.4\%$ & $ 8.7\%$ \\
10  & $0.62$ & $0.17$ & $ 9.8\%$ & $ 8.9\%$ \\
\midrule
\textbf{avg} & $\mathbf{0.58}$ & $\mathbf{0.19}$ & $\mathbf{12.3\%}$ & $\mathbf{8.7\%}$\\
\bottomrule
\end{tabular}
\end{table}

\section{Discussion}
\label{sec:discussion}

\paragraph{A learned soft null for global attention.}
The thread running through every experiment is that gating gives each head a way to say
nothing. Where standard softmax must distribute unit mass over $O(n^2)$ pairs, the gate
provides a learned ``soft null'' that closes on the dimensions and nodes where no
informative pattern exists. This is why the gains are largest on small, dense molecular
graphs (ZINC, ogbg-molhiv): there the proportion of uninformative pairs is high relative
to graph size, yet softmax still assigns them positive weight, so the head that can abstain
has the most to gain.

\paragraph{Gating as a per-dimension smoothing bypass.}
Over-smoothing has a spectral reading as adaptive Laplacian smoothing that drives
representations toward a leading eigenvector~\cite{li2018deeper, oono2020graph}, and the
sum-to-one constraint guarantees that every layer performs some of this averaging. In the
gated output $(\bm{A}_k\bm{V}_k)\odot\sigma(\bm{H}\bm{W}_k^g+\bm{b}_k^g)$, the dimensions
where $\sigma(\cdot)\approx 0$ effectively skip the smoothing step and preserve information
in the residual stream. The result is a learned, per-dimension, per-head choice between
``smooth'' and ``skip'' that ungated attention simply cannot express, and the MAD retention
of Table~\ref{tab:oversmoothing} ($73\%$ vs.\ $51\%$ at $16$ layers) is the quantitative
fingerprint of that choice.

\paragraph{Relation to GaAN and the gating literature.}
The closest prior idea is GaAN~\cite{zhang2018gaan}, which also speaks of ``gated
attention'' on graphs. The distinction is instructive. GaAN learns a single \emph{scalar}
gate per head from a node's local neighbourhood and uses it to weight whole heads in a
local, node-level aggregator. \siggate{} instead learns a \emph{per-dimension} gate on the
\emph{global} attention output of a graph transformer, conditioned on the full hidden state,
and is motivated specifically by relaxing the conservation constraint. A scalar head gate
cannot raise the per-head stable rank, because it scales every coordinate identically and so
remains a (rescaled) row-stochastic mixing; only a per-dimension gate escapes the bound of
Proposition~\ref{prop:bound}. The same observation separates \siggate{} from GLU-style
gating in feed-forward blocks~\cite{shazeer2020glu}, from gated recurrent
updates~\cite{li2016gated}, and from gated linear-attention
variants~\cite{yang2024gated, hua2022transformer}: those gate either a different signal path
or a different granularity. \siggate{} also differs from approaches that abandon softmax
outright, such as sigmoid~\cite{ramapuram2024sigmoid} or ReLU~\cite{wortsman2023replacing}
attention, by keeping the attention distribution intact and gating only its output.

\paragraph{Gate-bias initialization.}
We use $\bm{b}_k^g=0.5$ (gate $\approx 0.62$) across all benchmarks, biased open so gradients
flow early. Preliminary runs suggest a lower initialization ($\bm{b}_k^g=0$, gate $\approx
0.5$) may help very deep models ($>16$ layers), where more aggressive early suppression
better prevents representation collapse; we did not tune this per dataset and report the
single shared value for reproducibility.

\section{Limitations and Future Work}
\label{sec:limitations}

We see three honest limitations. First, every reported number uses the GraphGPS backbone.
Section~\ref{sec:portability} argues that the mechanism ports unchanged to any
softmax-attention graph transformer, but ``ports unchanged'' is not ``is empirically
beneficial,'' so the single-backbone scope is a genuine restriction rather than evidence of
cross-backbone generality. Part of the measured gain may even reflect how the gate interacts
with the encoders and regularization that GraphGPS already carries rather than the gating
principle itself, and different backbones leave an output gate different amounts of headroom.
A controlled study on Exphormer~\cite{shirzad2023exphormer} and
Graphormer~\cite{ying2021graphormer} is the most important next step, and to be conclusive it
must hold the encoders, optimizer, and budget fixed and toggle only the gate. Second, the depth
analysis of Table~\ref{tab:oversmoothing} trains a separate model at each depth rather than
probing intermediate layers of one model, which conflates architectural and optimization
effects; layerwise probing would disentangle them. A deeper model trained from scratch differs
from a shallower one in both representational capacity and optimization difficulty, so the
depthwise trends we report mix the claim that gating keeps representations diverse with the
claim that it eases optimization at depth, and measuring MAD within a single fixed-depth
network would isolate the former. Third, all five benchmarks are
graph-level molecular or peptide tasks. Node-level and link-prediction settings lack
graph-level pooling and may exhibit different dynamics, and the scalable node-level graph
transformers~\cite{wu2022nodeformer, wu2024sgformer, chen2023nagphormer, deng2024polynormer}
are a natural testbed. The direction is not obvious in advance: without a readout, per-node
diversity feeds the prediction directly and the gate could matter more, yet a single
transductive graph has very different attention-sparsity statistics from a batch of small
molecules. Beyond addressing these, we plan to estimate the stable rank of
trained full-scale checkpoints directly, closing the loop between the calibrated synthetic
study and the trained model; to study the gate's interaction with emerging
positional encodings and their stability~\cite{huang2024stability}, which may already supply
some of the structural signal the gate exploits; and to explore
theoretical links to the Weisfeiler--Leman hierarchy~\cite{xu2019powerful, cai2023connection}
and to learning beyond message passing~\cite{tonshoff2023crawl}.

\section{Conclusion}
\label{sec:conclusion}

We re-examined a property of softmax attention that is usually taken for granted, the
conservation of unit mass that forces every head to attend somewhere, and argued that it is
a single root cause behind over-smoothing, the low-rank output bottleneck, and optimization
brittleness in graph transformers, with graphs more exposed than sequences because they lack
natural attention sinks. From this diagnosis we derived \siggate{}, a graph transformer that
relaxes the constraint with a learned, per-head, element-wise sigmoid gate on the global
attention output, transferring the gating mechanism that proved optimal for language models
to the graph setting. We proved that the gate breaks the row-stochastic rank bound, validated
the rank gain synthetically, and showed across five benchmarks with paired significance
testing that the same gate matches the prior best on ZINC, leads the graph-transformer
baselines we evaluate on ogbg-molhiv, and remains competitive elsewhere. Mechanism analyses
tied each gain back to the diagnosis, at roughly $1\%$ parameter and under $3\%$ runtime
overhead. The breadth of the effect, slower smoothing, higher attention entropy, steadier
optimization, and higher effective rank, all from one lightweight modification, suggests that
an output gate should be a default component of future graph-transformer designs, not an
optional add-on.

\bibliographystyle{ACM-Reference-Format}
\bibliography{references}

\end{document}